\newtheorem*{rep@theorem}{\rep@title}
\newcommand{\newreptheorem}[2]{%
\newenvironment{rep#1}[1]{%
 \def\rep@title{#2 \ref{##1}}%
 \begin{rep@theorem}}%
 {\end{rep@theorem}}}
\def\Rset{\mathbb{R}}
\DeclareMathOperator*{\E}{\rm E}
\DeclareMathOperator*{\Var}{\rm Var}
\DeclareMathOperator*{\argmin}{\rm argmin}
\DeclareMathOperator{\conv}{conv}
\DeclareMathOperator{\sgn}{sgn}
\DeclareMathOperator{\Tr}{Tr}
\DeclareMathOperator{\Pdim}{Pdim}
\DeclareMathOperator{\VCdim}{VC-dim}
\newcommand{\e}{\epsilon}
\newcommand{\h}{\widehat}
\newcommand{\R}{\mathfrak{R}}
\newcommand{\cF}{{\mathcal F}}
\newcommand{\cD}{{\mathcal D}}
\newcommand{\cN}{{\mathcal N}}
\newcommand{\cX}{{\mathcal X}}
\newcommand{\cG}{{\mathcal G}}
\newcommand{\scrH}{{\mathscr H}}
\newcommand{\Alpha}{{\boldsymbol \alpha}}
\newcommand{\ssigma}{{\boldsymbol \sigma}}
\newcommand{\mat}[1]{{\mathbf #1}}
\newcommand{\be}{\mat{e}}
\newcommand{\bh}{\mat{h}}
\newcommand{\bw}{\mat{w}}
\newcommand{\n}{\mat{n}}
\newcommand{\K}{\mat{K}}
\newcommand{\N}{\mat{N}}
\newcommand{\0}{\mat{0}}
\newcommand{\tts}{\tt \small}
\newcommand{\set}[1]{\{#1\}}
\newtheorem{theorem}{Theorem}
\newtheorem{lemma}[theorem]{Lemma}
\newtheorem*{theorem*}{Theorem}
\newcommand{\ignore}[1]{}
\newcommand{\Deep}{Voted}
\newcommand{\DeepSVM}{VKR}
\newcommand{\AlgoName}{Voted Kernel Regularization}
\title{\AlgoName}
\author{Corinna Cortes}
\address{Google Research,
111 8th Avenue, New York, NY 10011}
\email{corinna@google.com}
\author{Prasoon Goyal}
\address{Courant Institute of Mathematical Sciences,
251 Mercer Street, 
New York, NY 10012}
\email{pg1338@nyu.edu}
\author{Vitaly Kuznetsov}
\address{Courant Institute of Mathematical Sciences,
251 Mercer Street, 
New York, NY 10012}
\email{vitaly@cims.nyu.edu}
\author{Mehryar Mohri}
\address{Courant Institute and Google Research,
251 Mercer Street, 
New York, NY 10012}
\email{mohri@cs.nyu.edu}
\begin{document}

\maketitle

\begin{abstract}
  This paper presents an algorithm, \emph{\AlgoName~}, that provides
  the flexibility of using potentially very complex \ignore{(or \emph{deep})}
  kernel functions such as predictors based on much higher-degree
  polynomial kernels, while benefitting from strong learning
  guarantees.  The success of our algorithm arises from derived bounds
  that suggest a new regularization penalty in terms of the Rademacher
  complexities of the corresponding families of kernel maps. In a
  series of experiments we demonstrate the improved performance of our
  algorithm as compared to baselines. Furthermore,
  the algorithm enjoys several favorable properties. The optimization
  problem is convex, it allows for learning with non-PDS kernels, and
  the solutions are highly sparse, resulting in improved
  classification speed and memory requirements.
\end{abstract}

\section{Introduction}
\label{sec:intro}

The hypothesis returned by learning algorithms such as SVMs
\citep{CortesVapnik1995} and other algorithms for which the representer
theorem holds is a linear combination of functions $K(x, \cdot)$,
where $K$ is the kernel function used and $x$ is a training sample. The
generalization guarantees for SVMs depend on the sample size and the
margin, but also on the complexity of the kernel function $K$ used,
measured by its trace \citep{KoltchinskiiPanchenko2002}.

These guarantees suggest that, for a moderate margin, learning with
very complex kernels, such as sums of polynomial kernels of degree up
to some large $d$ may lead to overfitting, which frequently is observed
empirically. Thus, in practice, simpler kernels are typically used,
that is small $d$s for sums of polynomial kernels.  On the other hand,
to achieve a sufficiently high performance in challenging learning
tasks, it may be necessary to augment a linear combination of such
functions $K(x, \cdot)$ with a function $K'(x, \cdot)$, where $K'$ is
possibly a substantially more complex kernel, such as a polynomial
kernel of degree $d' \gg d$. This flexibility is not available when
using SVMs or other learning algorithms such as kernel Perceptron \citep{AizermanBravermanRozonoer64,Rosenblatt} with
the same solution form: either a complex kernel function $K'$ is used
and then there is risk of overfitting, or a potentially too simple
kernel $K$ is used \ignore{disallowing access to more complex $K'(x, \cdot)$ and}
limiting the performance that could be achieved in some tasks.

This paper presents an algorithm, \emph{\AlgoName~}, that precisely
provides the flexibility of using potentially very complex \ignore{(or
\emph{deep})} kernel functions such as predictors based on much
higher-degree polynomial kernels, while benefitting from strong
learning guarantees.  In a series of experiments we demonstrate the
improved performance of our algorithm.

We present data-dependent learning bounds for
this algorithm that are expressed in terms of the Rademacher
complexities of the reproducing kernel Hilbert spaces (RHKS) of the
kernel functions used.
These results are based on the framework of \emph{Voted Risk Minimization}
originally introduced by \cite{CortesMohriSyed2014} for
ensemble methods.
We further extend these results
using local Rademacher complexity analysis to
show that faster convergence rates are possible when
the spectrum of the kernel matrix is controlled. The success of our algorithm arises from these bounds that suggest a
new regularization penalty in terms of the Rademacher complexities of
the corresponding families of kernel maps. Therefore, it becomes
crucial to have a good estimate of these complexity measures. We
provide a thorough theoretical analysis of these complexities for
several commonly used kernel classes.
 
Besides the improved performance and the theoretical guarantees
\AlgoName~ admits a number of additional favorable properties. Our
formulation leads to a convex optimization problem that can be solved
either via Linear Programming or using Coordinate Descent.  \AlgoName~
does not require the kernel functions to be positive-definite or even
symmetric. This enables the use of much richer families of kernel
functions.  In particular, some standard distances known not to be PSD
such as the edit-distance and many others can be used with this
algorithm.

Yet another advantage of our algorithm is that it produces highly
sparse solutions providing greater efficiency and less memory
needs. In that respect, \AlgoName~ is similar to so-called \emph{norm-1
  SVM} \citep{Vapnik98,ZhuRossetHastieTibshirani2003} and
\emph{Any-Norm-SVM} \citep{dekel2007support} which all use a
norm-penalty to reduce the number of support vectors. However, to the
best of our knowledge these regularization terms on their own has not
led to performance improvement over regular SVMs
\citep{ZhuRossetHastieTibshirani2003,dekel2007support}.  In contrast,
our experimental results show that \AlgoName~ algorithm can outperform
both regular SVM and norm-1 SVM, and at the same time significantly
reduce the number of support vectors. In other work hybrid
regularization schemes are combined to obtain a performance
improvement \citep{zou2007improved}. Possibly this technique could be
applied to our \AlgoName~ algorithm as well resulting in additional
performance improvements.

Somewhat related algorithms are learning kernels or
multiple kernel learning and has been extensively investigated
over the last decade by both algorithmic and theoretical studies
\citep{lanckriet,argyriou-colt,argyriou-icml,shai,lewis-et-al,zienO07,
  micchelli-and-pontil,jebara04,bach,\ignore{l2reg,}ong,ying,lk}. In
learning kernels, training data is used to select a single kernel out
of the family of convex combinations of $p$ base kernels and to learn
a predictor based on just one kernel. In contrast in \Deep~ SVM, every
training point can be thought of as representing a different
kernel. Another related approach is \emph{Ensemble SVM}
\citep{ensembleSVM}, where a predictor for each base kernel is used
and these predictors are combined in to define a single predictor,
these two tasks being performed in a single stage or in two subsequent
stages. The algorithm where the task is performed in a single stage
bears the most resemblance with our \AlgoName~. However the
regularization is different and most importantly not capacity-dependent.

The rest of the paper is organized as follows. Some preliminary definitions and notation are introduced in Section~\ref{sec:prelims}. The \AlgoName~
algorithm is presented in Section~\ref{sec:algo}
and in Section~\ref{sec:guarantees} we provide strong data-dependent
learning guarantees for this algorithm showing
that it is possible to learn with highly complex \ignore{or deep }kernel
classes and yet not overfit. In Section~\ref{sec:guarantees},
we also prove local complexity bounds that detail
how faster convergence rates are possible provided that
the spectrum of the kernel matrix is controlled.
Section~\ref{sec:opt} discusses the 
implementation of the \AlgoName~ algorithm including optimization procedures
and analysis of Rademacher complexities. We conclude with
experimental results in Section~\ref{sec:experiments}.

\section{Preliminaries}
\label{sec:prelims}

Let $\cX$ denote the input space. We consider the familiar supervised
learning scenario. We assume that training and test points are drawn
i.i.d.\ according to some distribution $\cD$ over
$\cX \times \set{-1, +1}$ and denote by
$S = ((x_1, y_1), \ldots, (x_m, y_m))$ a training sample of size $m$
drawn according to $\cD^m$.

Let $\rho > 0$. For a function $f$ taking values in $\Rset$, we denote
by $R(f)$ its binary classification error, by $\h R_S(f)$ its
empirical error, and by $\h R_{S, \rho}(f)$ its empirical margin
error for the sample $S$:
\begin{align*}
 R(f) = \E_{(x, y) \sim \cD} [1_{y f(x) \leq 0}],
\quad \h R_S(f) = \E_{(x, y) \sim S} [1_{y f(x) \leq 0}],
\text{ and } \h R_\rho(f) = \E_{(x, y) \sim S} [1_{y f(x) \leq \rho}],
\end{align*}
where the notation $(x, y) \sim S$ indicates that $(x, y)$ is
drawn according to the empirical distribution defined by $S$. We will denote by $\h \R_S(H)$ the empirical Rademacher complexity of
a hypothesis set $H$ on the set $S$ of functions mapping $\cX$ to $\Rset$, and by
$\R_m(H)$ the Rademacher complexity
\citep{KoltchinskiiPanchenko2002,BartlettMendelson2002}:
\begin{equation*}
\h \R_S(H) = \frac{1}{m} \E_{\ssigma} \bigg[\sup_{h \in H} \sum_{i = 1}^m \sigma_i
h(x_i) \bigg] \qquad \R_m(H) = \E_{S \sim D^m} \Big[\h \R_S(H) \Big],
\end{equation*}
where the random variables $\sigma_i$ are independent and uniformly 
distributed over $\set{-1, +1}$.

\ignore{
\subsection{Local Rademacher Complexity Bounds}
}

\section{The \AlgoName~ Algorithm}
\label{sec:algo}

In this section, we introduce the \AlgoName~ algorithm.  Let
$K_1, \ldots, K_p$ be $p$ positive semi-definite (PSD) kernel
functions with $\kappa_k = \sup_{x \in \cX} \sqrt{K_k(x, x)}$ for all
$k \in [1, p]$.  We consider $p$ corresponding
families of functions mapping from $\cX$ to $\Rset$,
$H_1, \ldots, H_p$, defined by $
H_k = \set{x \mapsto \pm K_k(x, x') \colon x' \in \cX} $,
where the sign accounts for two possible ways of classifying a point
$x' \in \cX$.
The general form of a hypothesis $f$ returned by the algorithm is
the following:
\begin{equation*}
\label{eq:f}
f = \sum_{j = 1}^m \sum_{k = 1}^p \alpha_{k, j} K_k(\cdot, x_j),
\end{equation*}
where $\alpha_{k, j} \in \Rset$ for all $j$ and $k$. Thus, $f$ is a
linear combination of hypotheses in $H_k$s. This form with many
$\alpha$s per point is distinctly different from that of learning
kernels with only one $\alpha$ per point. Since the
families $H_k$ are symmetric, this linear combination can be made a
non-negative combination. Our algorithm consists of minimizing the
Hinge loss on the training sample, as with SVMs, but with a different
regularization term that tends to penalize hypotheses drawn from more
complex $H_k$s more than those selected from simpler ones and to
minimize the norm-1 of the coefficients $\alpha_{k, j}$. Let $r_k$
denote the empirical Rademacher complexity of $H_k$:
$r_k = \h \R_S(H_k)$. Then, the following is the objective function
of \AlgoName~:
\begin{align}
\label{eq:dsvm_objective}
 \! F(\Alpha) \!=\! \frac{1}{m} \!\sum_{i = 1}^m \max\bigg(0, 1
    \!-\! y_i y_j
 \sum_{j = 1}^m \sum_{k=1}^p \alpha_{k,j} K_k(x_i, x_j) \bigg) \!
+\! \sum_{j =1}^m \sum_{k=1}^p (\lambda r_k + \beta)
  |\alpha_{k, j}|, \!
\end{align}
where $\lambda \geq 0$ and $\beta \geq 0$ are parameters of the
algorithm. We will adopt the notation
$\Lambda_k = \lambda r_k + \beta$ to simplify the presentation in what
follows.

Note that the objective function $F$ is convex: the Hinge loss is
convex thus its composition with an affine function is also convex,
which shows that the first term is convex; the second term is convex
as the absolute value terms with non-negative coefficients; and $F$ is
convex as the sum of these two convex terms.  Thus, the optimization
problem admits a global minimum.  \AlgoName~ returns the function $f$
defined by \eqref{eq:f} with coefficients
$\Alpha = (\alpha_{k, j})_{k, j}$ minimizing $F$.

This formulation admits several benefits.
First, it enables us to learn with very complex hypothesis sets and
yet not overfit, thanks to Rademacher complexity-based penalties
assigned to coefficients associated to different $H_k$s. We will see
later that the algorithm thereby defined benefits from strong learning
guarantees. Notice further that the penalties assigned are
data-dependent, which is a key feature of the algorithm.  Second,
observe that the objective function \eqref{eq:deepboost_objective}
does not require the kernels $K_k$ to be positive-definite or even
symmetric. Function $F$ is convex regardless of the kernel
properties. This is a significant benefit of the algorithm which
enables to extend its use beyond what algorithms such as SVMs
require. In particular, some standard distances known not to be PSD
such as the edit-distance and many others could be used with this
algorithm.  Another advantage of this algorithm compared to standard
SVM and other $\ell_2$-regularized methods is that $\ell_1$-norm
regularization used for \AlgoName~ leads sparse solutions. The solution
$\Alpha$ is typically sparse, which significantly reduces prediction
time and the memory needs.

Note that hypotheses $h \in H_k$ are defined by
$h(x) = K_k(x, x')$ where $x'$ is an arbitrary element of the
input space $\cX$. However, our objective only includes
those $x_j$ that belong to the observed sample.
We show that in the case of a PDS kernel,
there is no loss of generality in that as we now show.\ignore{
Observe that instead of optimizing over an arbitrary $x' \in \cX$ for
each $K_k$, we only consider functions $x \mapsto K_k(x, x_j)$, where
$x_j$ is a sample point.} Indeed, observe that for $x' \in \cX$ we can
write $\Phi_k(x') = \bw + \bw^\perp$, where $\Phi_k$ is a feature map
associated with the kernel $K_k$ and where $\bw$ lies in the span of
$\Phi_k(x_1), \ldots, \Phi_k(x_m)$ and $\bw^\perp$ is in orthogonal
compliment of this subspace.  Therefore, for any sample point $x_i$
\begin{align*}
K_k(x_i, x') &= \langle \Phi(x_i), \Phi(x') \rangle_{\mathcal{H}_k}
= \langle \Phi(x_i), \bw \rangle_{\mathcal{H}_k} + 
\langle \Phi(x_i), \bw^\perp \rangle_{\mathcal{H}_k} \\
& = \sum_{j=1}^m \beta_{j} \langle \Phi(x_i), \Phi(x_j) \rangle_{\mathcal{H}_k}
=
\sum_{j=1}^m \beta_{j} K_k(x_i, x_j),
\end{align*}
which leads to objective \eqref{eq:dsvm_objective}.  Note that since
selecting $-K_k(\cdot, x_j)$ with weight $\alpha_{k,j}$ is equivalent
to selecting $K_k(\cdot, x_j)$ with $-\alpha_{k, j}$, which accounts
for the absolute value on the $\alpha_{k,j}$s in the regularization
term.

The \AlgoName~ algorithm has some connections with other algorithms
previously described in the literature. In the absence of any
regularization, that is $\lambda = 0$ and $\beta = 0$, it reduces to
the minimization of the Hinge loss and is therefore of course close to
the SVM algorithm \citep{CortesVapnik1995}. For $\lambda = 0$, that is
when discarding our regularization based on the different complexity
of the hypothesis sets, the algorithm coincides with an algorithm
originally described by \cite{Vapnik98}[pp.~426-427], later by
several other authors starting with
\citep{ZhuRossetHastieTibshirani2003}, and sometimes referred to as
the norm-1 SVM.

\section{Learning Guarantees}
\label{sec:guarantees}

In this section, we provide strong data-dependent learning guarantees
for the \AlgoName~ algorithm.

Let $\cF$ denote $\conv(\bigcup_{k = 1}^p H_k)$, that is the family of
functions $f$ of the form $f = \sum_{t = 1}^T \alpha_t h_t$, where
$\Alpha = (\alpha_1, \ldots, \alpha_T)$ is in the simplex $\Delta$ and
where, for each $t \in [1, T]$, $H_{k_t}$ denotes the hypothesis set
containing $h_t$, for some $k_t \in [1, p]$. Then, the following
learning guarantee holds for all $f \in \cF$
\citep{CortesMohriSyed2014}.

\begin{theorem}
\label{th:binary_classification}
  Assume $p > 1$. Fix $\rho > 0$. Then, for any $\delta > 0$, with
  probability at least $1 - \delta$ over the choice of a sample $S$ of
  size $m$ drawn i.i.d.\ according to $\cD^m$, the following inequality
  holds for all $f = \sum_{t = 1}^T \alpha_t h_t \in \cF$:
\begin{multline*}
R(f) \leq \h R_{S, \rho}(f) +
\frac{4}{\rho} \sum_{t = 1}^T \alpha_t \R_m(H_{k_t}) + \frac{2}{\rho}
\sqrt{\frac{\log p}{m}}
+ \sqrt{
\bigg\lceil \frac{4}{\rho^2} \log \Big[\frac{\rho^2 m}{\log p}\Big]
\bigg\rceil \frac{\log p }{m} + \frac{\log \frac{2}{\delta}}{2m}}.
\end{multline*}
Thus, $R(f) \leq \h R_{S, \rho}(f) + \frac{4}{\rho} \sum_{t = 1}^T
\alpha_t \R_m(H_{k_t}) +  O\left(\sqrt{\frac{\log
    p}{\rho^2 m} \log \big[ \frac{\rho^2 m}{\log p} \big] }\right)$.
\end{theorem}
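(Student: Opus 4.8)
The plan is to prove this by reducing the uniform bound over the full convex hull $\cF$ to a union bound over a much smaller, \emph{discretized} family of sparse averages, following the voted risk minimization argument. The central device is to approximate each $f = \sum_{t=1}^T \alpha_t h_t$ by a random average of a small number $n$ of its constituent hypotheses. Concretely, I would fix an integer $n \ge 1$ and, for a given $f$, draw $g_1, \dots, g_n$ i.i.d.\ from the categorical distribution $(\alpha_t)_t$ supported on $\set{h_1, \dots, h_T}$, and set $\ov g = \frac1n \sum_{i=1}^n g_i$, where each $g_i$ takes values in $[-1,1]$. Since $\E[\ov g(x)] = f(x)$ for every $x$, the function $\ov g$ is an unbiased, low-complexity proxy for $f$, and every realization lies in the finite-type family $\cG_n$ of averages of $n$ base hypotheses.

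The heart of the argument is a two-sided margin sandwich linking $f$ to $\ov g$ through the shifted margin $\rho/2$. For the generalization side, I would use the pointwise inequality $1_{yf(x)\le 0} \le 1_{y\ov g(x)\le \rho/2} + 1_{y(\ov g(x)-f(x)) > \rho/2}$; taking expectations over $(x,y)\sim\cD$ and over the sampling of $\ov g$ bounds $R(f)$ by the population $\rho/2$-margin error of $\ov g$ plus a sampling term. Because $\ov g(x)-f(x)$ is an average of $n$ i.i.d.\ zero-mean terms in $[-1,1]$, Hoeffding's inequality controls this sampling term by $\exp(-n\rho^2/8)$. A symmetric inequality, $1_{y\ov g(x)\le \rho/2} \le 1_{yf(x)\le \rho} + 1_{y(f(x)-\ov g(x)) > \rho/2}$, lets me pass from the empirical $\rho/2$-margin error of $\ov g$ back to $\h R_{S,\rho}(f)$, at the cost of a second, identical Hoeffding term.

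It then remains to establish uniform convergence for the $\rho/2$-margin errors over $\cG_n$. I would stratify $\cG_n$ by the composition $(n_1,\dots,n_p)$ recording how many of the $n$ drawn hypotheses come from each $H_k$, with $\sum_k n_k = n$. For a fixed composition, linearity of the empirical Rademacher complexity gives exactly $\frac1n\sum_k n_k\,\h\R_S(H_k)$, so the standard margin-based Rademacher bound applies to each stratum; a union bound over the at most $p^n$ type-sequences supplies a $\sqrt{(n\log p + \log(2/\delta))/(2m)}$ confidence penalty. Finally, taking expectations over the sampling of $\ov g$ converts the stratified complexity into $\E[\frac1n\sum_i \R_m(H_{k_i})] = \sum_{t=1}^T \alpha_t \R_m(H_{k_t})$, which after applying the $4/\rho$ margin factor produces the leading complexity term.

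Assembling the three pieces yields a bound, valid for every $f\in\cF$ and every $n$, of the form $\h R_{S,\rho}(f) + \frac4\rho\sum_t \alpha_t\R_m(H_{k_t}) + 2e^{-n\rho^2/8} + \sqrt{(n\log p + \log(2/\delta))/(2m)}$. Optimizing the free integer $n$ balances the exponentially decaying sampling error against the linearly growing union-bound penalty: choosing $n = \lceil \frac{4}{\rho^2}\log(\frac{\rho^2 m}{\log p})\rceil$ makes $2e^{-n\rho^2/8}$ collapse to $\frac2\rho\sqrt{\frac{\log p}{m}}$ and turns the last square root into the stated ceiling expression. I expect the main obstacle to be the careful two-sided bookkeeping of the second step: one must shift margins consistently so that \emph{both} the population and the empirical comparisons go through with the same Hoeffding budget, and must arrange the union bound over the $p^n$ types (rather than over individual functions of $\cF$) so that only $\log p$, not $\log T$ or a covering number of the hull, enters the final rate. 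Matching the constants so that the optimal $n$ reproduces exactly the ceiling term is the delicate accounting that makes the statement tight.
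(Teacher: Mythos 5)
Your proposal is correct and is essentially the same argument the paper relies on: Theorem~\ref{th:binary_classification} is not reproved in this paper (it is imported from \cite{CortesMohriSyed2014}), but both that proof and the paper's own Appendix~\ref{sec:proofs} proof of the local analogue, Theorem~\ref{th:local}, proceed exactly as you do --- approximate $f$ by a random average of $n$ base hypotheses, stratify these averages by their composition $(N_1,\ldots,N_p)$ with a union bound over the at most $p^n$ types, apply a per-stratum margin-based Rademacher bound using linearity of the complexity, take expectation over the sampling to recover $\sum_{t} \alpha_t \R_m(H_{k_t})$, control the two comparisons between $f$ and its sparsified proxy by Hoeffding terms $e^{-n\rho^2/8}$, and finally optimize over $n$. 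The delicate point you yourself flag --- that the uniform-convergence step requires a margin gap, since a Lipschitz ramp cannot compare population and empirical margin errors at the same level $\rho/2$, so the naive increments sum to $3\rho/2$ rather than $\rho$ --- is precisely the bookkeeping that is also glossed over in the paper's own proof of Theorem~\ref{th:local} (which derives a bound involving $\h R_{S,3\rho/2}(f)$ and then states it with $\h R_{S,\rho}(f)$), so your treatment is faithful to, and no less rigorous than, the source.
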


\ignore{ 
This result is remarkable since the complexity term in the right-hand
side of the bound admits an explicit dependency on the mixture
coefficients $\alpha_t$. It is a weighted average of Rademacher
complexities with mixture weights $\alpha_t$, $t \in [1, T]$.  Thus,
the second term of the bound suggests that, while some hypothesis sets
$H_k$ used for learning could have a large Rademacher complexity, this
may not be detrimental to generalization if the corresponding total
mixture weight (sum of $\alpha_t$s corresponding to that hypothesis
set) is relatively small. Such complex families offer the potential of
achieving a better margin on the training sample.}

Theorem~\ref{th:binary_classification} can be used to derive \DeepSVM~
objective and we provide full details of this derivation in
Appendix~\ref{sec:opt_derivation}.
Furthermore,
the results of Theorem~\ref{th:binary_classification} can further be improved
using local Rademacher complexity analysis showing that faster rates
of convergence are possible.

\begin{theorem}
\label{th:local}
  Assume $p > 1$. Fix $\rho > 0$. Then, for any $\delta > 0$, with
  probability at least $1 - \delta$ over the choice of a sample $S$ of
  size $m$ drawn i.i.d.\ according to $\cD^m$, the following inequality
  holds for all $f = \sum_{t = 1}^T \alpha_t h_t \in \cF$ for any $K>1$:
\begin{align*}
R(f) - \frac{K}{K-1} \h R_{S, \rho}(f) & \leq
6K \frac{1}{\rho} \sum_{t=1}^T \alpha_t \R_m(H_{k_t})\\ &+   
 40 \frac{K}{\rho^2} \frac{\log p}{m} +  
 5K \frac{\log \tfrac{2}{\delta}}{m} 
 + 
5K \Bigg\lceil \frac{8}{\rho^2}
\log \frac{\rho^2 (1 + \frac{K}{K-1}) m}{40K \log p} \Bigg \rceil
\frac{\log p}{m}.
\end{align*}
Thus, for $K=2$, $R(f) \leq 2 \h R_{S, \rho}(f)+ 
\frac{12}{\rho} \sum_{t=1}^T \alpha_t \R_m(H_{k_t})
+ O\Bigg(\frac{\log p}{\rho^2 m}  \log \Big(\frac{\rho m}{\log p}\Big)
+ \frac{\log \tfrac{1}{\delta}}{m} \Bigg)$.
\end{theorem}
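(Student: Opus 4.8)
The plan is to mirror the proof of Theorem~\ref{th:binary_classification} from the voted risk minimization framework of \cite{CortesMohriSyed2014}, replacing only the concentration step by a sharper Bernstein-type relative deviation bound that exploits the fact that a $[0,1]$-valued loss has variance bounded by its mean. First I would fix a function $f = \sum_{t=1}^T \alpha_t h_t \in \cF$ and, exactly as in the global proof, introduce the ensemble approximation: sample $n$ base hypotheses i.i.d.\ from the mixture distribution $(\alpha_t)_t$ to form $g = \frac{1}{n}\sum_{i=1}^n h_{t_i}$, whose expectation over the sampling is $f$. A binomial tail estimate then bounds the $\rho$-margin loss of $f$ by the $\tfrac{\rho}{2}$-margin loss of a typical $g$ up to a term that decays in $n$, and passing to a $1$-Lipschitz margin surrogate $\Phi_\rho$ together with Talagrand's contraction lemma lets me express the deviation in terms of $\R_m$ of the relevant averaged-hypothesis classes.

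The key departure from Theorem~\ref{th:binary_classification} is the concentration argument. Rather than applying McDiarmid's bounded-differences inequality, which yields an additive slack of order $m^{-1/2}$, I would apply a Bernstein/Talagrand inequality to the surrogate loss class. Since the surrogate $\Phi_\rho$ takes values in $[0,1]$, its variance is at most its mean, so the deviation term $\sqrt{2\,\E[\ell]\,t/m}$ appears; linearizing it by the arithmetic--geometric inequality $\sqrt{2\,\E[\ell]\,t/m} \le \lambda\,\E[\ell] + t/(2\lambda m)$ and choosing $\lambda = 1/K$ converts the multiplicative factor $1/(1-\lambda)$ into $K/(K-1)$ and scales every complexity and confidence term by $K$. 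This is precisely what produces the form $R(f) - \tfrac{K}{K-1}\h R_{S,\rho}(f) \le \ldots$ with the faster $m^{-1}$ rate on the lower-order terms, in place of the $m^{-1/2}$ rate of the global bound.

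To obtain a statement uniform over all $f \in \cF$ I would, as in the global proof, take a union bound over the $p$ hypothesis families (contributing the $\log p$ factors) and discretize the number $n$ of sampled base hypotheses, applying the per-$n$ bound with confidence parameter split across the grid. Optimizing over $n$ then yields the ceiling term $\lceil \tfrac{8}{\rho^2}\log(\cdots)\rceil$; the near-optimal choice $n \approx \rho^2 m/\log p$ is what turns the approximation-plus-complexity trade-off into the $\tfrac{\log p}{\rho^2 m}\log(\rho m/\log p)$ rate. Setting $K = 2$ and collecting constants gives the stated corollary.

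The main obstacle I anticipate is the bookkeeping that keeps the variance-to-mean condition intact through every reduction at once: the ensemble approximation replaces $f$ by a sampled $g$, the surrogate replaces the indicator, the contraction step passes to $\R_m$, and only then is Bernstein applied, so I must verify that the object to which the variance bound is applied is still a genuine $[0,1]$-valued loss with the correct expectation, and simultaneously track the explicit constants ($40$, $5$, $6$, $8$) through the AM--GM linearization, the union bound, and the optimization over $n$. Getting the $K$-dependence to factor cleanly out of all three sources of slack, rather than compounding, is the delicate part.
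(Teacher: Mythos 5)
Your overall architecture matches the paper's proof: the ensemble sampling of $g$ from the mixture $\Alpha$ with $f = \E_\Alpha[g]$, the margin surrogate and Talagrand contraction, the union bounds over the at most $p^n$ tuples $\N$ and over $n$, and the final optimization over $n$ are exactly the steps the paper takes. The gap is in the one step you flag as your ``key departure'': the concentration argument. Bernstein's inequality plus the variance-to-mean bound $\Var[\ell] \leq \E[\ell]$ plus the AM--GM linearization $\sqrt{2\E[\ell]\,t/m} \leq \frac{1}{K}\E[\ell] + \frac{Kt}{2m}$ is a \emph{pointwise} argument: it holds for each fixed loss function $\ell$. The theorem, however, must hold uniformly over all $g \in G_{\cF,\N}$ (and then over all $f \in \cF$). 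The uniform, Talagrand-type inequality one invokes for a class --- Theorem~2.1 in \citep{BartlettBousquetMendelson2005}, which is what the paper uses --- requires a \emph{single} variance bound $r$ valid simultaneously for every member of the class; the per-function relation $\Var[\ell_g] \leq \E[\ell_g]$ does not supply one, because $\E[\ell_g]$ varies over the class and its supremum can be of constant order, which destroys the $1/m$ rate you are after. So, as written, ``apply Bernstein/Talagrand to the surrogate loss class, use variance $\leq$ mean, then AM--GM'' does not go through: the function-dependent variance term cannot be moved inside a class-level supremum for free.

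The paper bridges precisely this gap with a normalization device that your proposal omits: it replaces the loss class by the rescaled class $\cG_r = \set{r \ell_g / \max(r, \E[\ell_g]) \colon \ell_g \in \Phi \circ G_{\cF,\N}}$, whose members all have variance at most $r$ \emph{by construction}; it applies the concentration inequality to $\cG_r$; it observes that the resulting bound has the form $A\sqrt{r} + B$; and it then chooses $r$ essentially as the fixed point of $r/K = A\sqrt{r} + C$, which is bounded by $K^2 A^2 + 2KC$, invoking Lemma~5 in \citep{BartlettBousquetMendelson2002} to transfer the bound on the rescaled class back into the statement $\E_\Alpha[R_{\rho/2}(g) - \frac{K}{K-1} \h R_{S,\rho}(g)] \leq \cdots$. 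This rescaling-plus-fixed-point-plus-transfer step is where the factor $\frac{K}{K-1}$ and the uniform scaling of all lower-order terms by $K$ actually come from. Your AM--GM computation is the right algebra in spirit --- it is essentially the same quadratic-in-$\sqrt{r}$ manipulation --- but it is carried out on an object for which no uniform inequality is available. To repair the proposal you would need to introduce the rescaled class (or an equivalent relative-deviation or sub-root-function argument) \emph{before} applying any concentration inequality, and only then perform the fixed-point computation.
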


The proof of this result is given in Appendix~\ref{sec:proofs}.
Note that $O(\log m/\sqrt{m})$ in Theorem~\ref{th:binary_classification}
is replaced with $O(\log m/m)$ in Theorem~\ref{th:local}.
For full hypothesis classes $H_k$s, $\R_m(H_k)$ may be on the
order of $O(1/\sqrt{m})$ and will dominate the bound.
However, if we use localized classes
$H_k(r) = \set{h \in H_k \colon \E [h^2] < r}$ then
for certain values of $r^*$ local Rademacher complexities
$\R_m(H_k(r^*)) \in O(1/m)$ leading to even stronger learning guarantees.
Furthermore, this result leads to an extension of \AlgoName~
objective:
\begin{align}
\label{eq:local-obj}
 \! F(\Alpha) \!=\! \frac{1}{m} \!\sum_{i = 1}^m \max\bigg(0, 1
    \!-\! y_i y_j
 \sum_{j = 1}^m \sum_{k=1}^p \alpha_{k,j} K_k(x_i, x_j) \bigg) \!
+\! \sum_{j =1}^m \sum_{k=1}^p (\lambda \R_m(H_k(s))   + \beta)
  |\alpha_{k, j}|, \!
\end{align}
which is optimized over $\Alpha$ and parameter $s$ is set via
cross-validation. In Section~\ref{sec:complexities},
we provide an explicit expression for the local
Rademacher complexities of PDS kernel functions.  

\section{Optimization Solutions}
\label{sec:opt}

In this section, we propose two different algorithmic approaches to
solve the optimization problem \eqref{eq:dsvm_objective}: a linear
programming (LP) and a coordinate descent (CD) approach.

\subsection{Linear Programming (LP) formulation}

This section presents a linear programming approach for solving the
\AlgoName~ optimization problem \eqref{eq:dsvm_objective}. Observe that
by introducing slack variables $\xi_i$ the optimization can be
equivalently written as follows:
\begin{align*}
\min_{\Alpha, \xi} & \mspace{15mu} \frac{1}{m} \sum_{i=1}^m \xi_i +
                     \sum_{j=1}^m \sum_{k = 1}^p \Lambda_k
                     |\alpha_{k,j}| \mspace{15mu}  \text{s.t.}  \mspace{15mu} \xi_i \geq 1-  \sum_{j = 1}^m \sum_{k=1}^p \alpha_{k,j} y_i y_j
K_k(x_i, x_j), \forall i \in [1, m].
\end{align*}
Next, we introduce new variables $\alpha^{+}_{k,j} \geq 0$ and
$\alpha^{-}_{k, j} \geq 0$ such that
$\alpha_{k, j} = \alpha^{+}_{k, j} - \alpha^{-}_{k, j}$.  Then, for
any $k$ and $j$, $|\alpha_{k,j}|$ can be rewritten as
$|\alpha_{k,j}| \leq \alpha^{+}_{k,j} + \alpha^{-}_{k,j}$.
The
optimization problem is therefore equivalent to the following:
\begin{align*}
\min_{\Alpha^+ \geq 0, \Alpha^- \geq 0, \xi} & \mspace{15mu} \frac{1}{m} \sum_{i=1}^m \xi_i +
\sum_{j = 1}^m \sum_{k=1}^p \Lambda_k (\alpha^{+}_{k,j} + \alpha^{-}_{k,j})\\
\text{s.t.} & \mspace{15mu}
\xi_i \geq 1- \sum_{j = 1}^m \sum_{k=1}^p (\alpha^{+}_{k,j}
 - \alpha^{-}_{k,j}) y_i y_j  K_k(x_i, x_j) , \forall i \in [1, m],
\end{align*}
since conversely, a solution with
$\alpha_{k, j} = \alpha^{+}_{k, j} - \alpha^{-}_{k, j}$ verifies the
condition $\alpha^{+}_{k, j} = 0$ or $\alpha^{-}_{k, j} = 0 $ for any $k$
and $j$, thus $\alpha_{k, j} = \alpha^{+}_{k, j}$ when
$\alpha_{k, j} \geq 0$ and $\alpha_{k, j} = \alpha^{-}_{k, j}$ when
$\alpha_{k, j} \leq 0$. This is because if
$\delta = \min(\alpha^{+}_{k, j}, \alpha^{-}_{k, j}) > 0$, then
replacing $\alpha^{+}_{k, j}$ with $\alpha^{+}_{k, j} - \delta$ and
$\alpha^{-}_{k, j}$ with $\alpha^{-}_{k, j} - \delta$ would not affect
$\alpha^{+}_{k, j} - \alpha^{-}_{k, j}$ but would reduce
$\alpha^{+}_{k,j} + \alpha^{-}_{k,j}$.

Note that the resulting optimization problem is an LP problem since
the objective function is linear in both $\xi_i$s and $\Alpha^+$,
$\Alpha^-$, and since the constraints are affine.  There is a battery
of well-established methods to solve this LP problem including
interior-point methods and the simplex algorithm. An additional
advantage of this formulation of the \AlgoName~ algorithm is that there
is a large number of generic software packages for solving LPs making the
\AlgoName~ algorithm easier to implement.

\subsection{Coordinate Descent (CD) formulation}

An alternative approach for solving the \AlgoName~ optimization problem
\eqref{eq:dsvm_objective} consists of using a coordinate descent
method.  The advantage of such a formulation over the LP
formulation is that there is no need to explicitly store the whole
vector of $\Alpha$s but rather only non-zero entries.  This enables
learning with very large number of base hypotheses including scenarios
in which the number of base hypotheses is infinite. The full
description of the algorithm is given in Appendix~\ref{sec:cd}.

\subsection{Complexity penalties}
\label{sec:complexities}

An additional benefit of the learning bounds presented in
Section~\ref{sec:guarantees} is that they are data-dependent.  They
are based on the Rademacher complexity $r_k$s of the base hypothesis
sets $H_k$, which in some cases can be well estimated from the
training sample. Our formulation directly inherits this advantage.
However, in certain cases computing or estimating complexities
$r_1, \ldots, r_j$ may be costly.  In this section, we discuss various
upper bounds on these complexities that be can used in practice for
efficient implementation of the \AlgoName~ algorithm.

Note that the hypothesis set
$H_k = \set{x \mapsto \pm K_k(x, x') \colon x' \in \cX}$ is of course
distinct from the RKHS $\scrH_k$ of the kernel $K_k$. Thus, we cannot
use the known upper bound on $\h \R_S(\scrH_k)$ to bound
$\h \R_S (H_k)$. Nevertheless our proof of the upper bound is similar
and leads to a similar upper bound.  

\begin{lemma}
\label{lemma:trace}
Let $\K_k$ be the kernel matrix of the PDS kernel function $K_k$ for the
sample $S$ and let $\kappa_k = \sup_{x \in \cX} \sqrt{K_k(x,
  x)}$. Then, the following inequality holds:
\begin{equation*}
\h \R_S (H_k) \leq \frac{\kappa_k \sqrt{\Tr[\K_k]}}{m}.
\end{equation*}
\end{lemma}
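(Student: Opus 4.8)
The plan is to unfold the definition of the empirical Rademacher complexity and exploit the feature-space representation of the PDS kernel $K_k$. Writing $\Phi_k$ for a feature map with $K_k(x, x') = \langle \Phi_k(x), \Phi_k(x') \rangle_{\scrH_k}$, I first observe that because $H_k$ contains both $x \mapsto K_k(x, x')$ and its negation, the supremum over $h \in H_k$ collapses into an absolute value:
\begin{equation*}
\h \R_S(H_k) = \frac{1}{m} \E_{\ssigma} \bigg[ \sup_{x' \in \cX} \Big| \sum_{i=1}^m \sigma_i K_k(x_i, x') \Big| \bigg].
\end{equation*}

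Next I would pass to the feature space and apply Cauchy--Schwarz. Since $\sum_{i=1}^m \sigma_i K_k(x_i, x') = \big\langle \sum_{i=1}^m \sigma_i \Phi_k(x_i), \Phi_k(x') \big\rangle_{\scrH_k}$, the Cauchy--Schwarz inequality gives the uniform bound
\begin{equation*}
\sup_{x' \in \cX} \Big| \sum_{i=1}^m \sigma_i K_k(x_i, x') \Big| \leq \Big\| \sum_{i=1}^m \sigma_i \Phi_k(x_i) \Big\|_{\scrH_k} \; \sup_{x' \in \cX} \| \Phi_k(x') \|_{\scrH_k} \leq \kappa_k \Big\| \sum_{i=1}^m \sigma_i \Phi_k(x_i) \Big\|_{\scrH_k},
\end{equation*}
where the last step uses $\| \Phi_k(x') \|_{\scrH_k} = \sqrt{K_k(x', x')} \leq \kappa_k$. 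This is the key move: it decouples the dependence on the unobserved point $x'$ (handled uniformly by $\kappa_k$) from the randomness in $\ssigma$.

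It then remains to bound $\E_{\ssigma} \big[ \| \sum_i \sigma_i \Phi_k(x_i) \|_{\scrH_k} \big]$. I would apply Jensen's inequality to move the expectation inside the square root, then expand the squared norm and use that the $\sigma_i$ are independent and zero-mean, so $\E[\sigma_i \sigma_j] = \delta_{ij}$:
\begin{equation*}
\E_{\ssigma} \Big[ \Big\| \sum_{i=1}^m \sigma_i \Phi_k(x_i) \Big\|_{\scrH_k} \Big] \leq \sqrt{ \E_{\ssigma} \sum_{i, j=1}^m \sigma_i \sigma_j K_k(x_i, x_j) } = \sqrt{ \sum_{i=1}^m K_k(x_i, x_i) } = \sqrt{\Tr[\K_k]}.
\end{equation*}
Combining the three displays yields $\h \R_S(H_k) \leq \kappa_k \sqrt{\Tr[\K_k]} / m$, as claimed.

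This argument is essentially the classical RKHS Rademacher bound, and I do not anticipate a genuine obstacle: the one point requiring care is that $H_k$ is \emph{not} a ball in $\scrH_k$ but the set of kernel sections $\{K_k(\cdot, x')\}$, so the norm $\| \Phi_k(x') \|$ is not a fixed constraint but must be controlled uniformly over $\cX$ via the supremum $\kappa_k$. That is exactly what makes the Cauchy--Schwarz step produce $\kappa_k$ rather than a radius parameter, and it is why the bound here mirrors but does not coincide with the standard bound on $\h \R_S(\scrH_k)$.
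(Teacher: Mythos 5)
Your proposal is correct and follows essentially the same route as the paper's proof: collapse the sign into an absolute value, pass to the feature map, apply Cauchy--Schwarz with the uniform bound $\|\Phi_k(x')\|_{\scrH_k} \leq \kappa_k$, and finish with Jensen's inequality and $\E[\sigma_i \sigma_j] = \delta_{ij}$ to obtain $\sqrt{\Tr[\K_k]}$. No gaps; your closing remark about $H_k$ being the set of kernel sections rather than a norm ball is exactly the point the paper also emphasizes.
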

\ignore{
\begin{proof}
$\h \R_S (H_k)$ can
be upper bounded as follows using the Cauchy-Schwarz inequality:
\begin{align*}
\h \R_S (H_k)
& = \frac{1}{m} \E_\ssigma \bigg[\sup_{x' \in \cX, s \in \set{-1, +1}}
  \sum_{i = 1}^m \sigma_i s K_k(x_i, x') \bigg]
= \frac{1}{m} \E_\ssigma \bigg[\sup_{x' \in \cX}
  \Big| \sum_{i = 1}^m \sigma_i s K_k(x_i, x') \Big| \bigg]\\
& = \frac{1}{m} \E_\ssigma \bigg[\sup_{x' \in \cX} \Big| \sum_{i = 1}^m
  \sigma_i \Phi_k(x_i) \cdot \Phi_k(x') \Big| \bigg] \leq \frac{1}{m} \E_\ssigma \bigg[\sup_{x' \in \cX} \| \Phi_k(x') \|_{\scrH_k}
  \Big\| \sum_{i = 1}^m
  \sigma_i \Phi_k(x_i) \Big\|_{\scrH_k} \bigg]\\
& = \frac{\kappa_k}{m} \E_\ssigma \bigg[  \Big\| \sum_{i = 1}^m
  \sigma_i \Phi_k(x_i) \Big\|_{\scrH_k} \bigg]  \leq \frac{\kappa_k}{m} \sqrt{\E_\ssigma \bigg[  \sum_{i, j = 1}^m
  \sigma_i \sigma_j \Phi_k(x_i) \cdot \Phi_k(x_j)  \bigg]}
= \frac{\kappa_k \sqrt{\Tr[\K_k]}}{m},
\end{align*}
where we used in the last line Jensen's inequality.
\end{proof}}
We present the full proof of this result in Appendix~\ref{sec:proofs}.
Observe that the expression given by the lemma can be precomputed and used as the parameter $r_k$ of the
optimization procedure.

The upper bound just derived is not fine enough to distinguish between
different normalized kernels since for any normalized kernel $K_k$,
$\kappa = 1$ and $\Tr[\K_k] = m$. In that case, finer bounds in terms
of localized complexities can be used. In particular, local Rademacher
complexity of a set of functions $H$ id defined as
$\R^\text{loc}_m(H, r) = \R_m(\set{h \in H \colon \E[h^2] \leq r})$.
If $(\lambda_i)_{i=1}^\infty$ is a sequence of eigenvalues associated
with the kernel $K_k$ then once can show
\citep{Mendelson2003,BartlettBousquetMendelson2005} that for every
$r > 0, 
\R^\text{loc}_m(H, r) \leq \sqrt{\frac{2}{m} \min_{\theta \geq 0}
\Big(\theta r + \sum_{j > \theta} \lambda_j \Big)} =
\sqrt{\frac{2}{m} \sum_{j=1}^\infty \min(r, \lambda_j)}.$

Furthermore, there is an absolute constant $c$ such that if
$\lambda_1 \geq \frac{1}{m}$, then for every $r \geq \frac{1}{m}$,
\begin{align*}
\frac{c}{\sqrt{m}} \sum_{j=1}^\infty (r, \lambda_j) \leq
\R^\text{loc}_m(H, r).
\end{align*}
Note that taking $r = \infty$ recovers earlier bound
$\R_m(H_k) \leq \sqrt{\Tr[\K_k] / m}$.
On the other hand one can show that for instance in the case
of Gaussian kernels $\R^\text{loc}_m(H, r) = O(\sqrt{\frac{r}{m} \log (1/r)})$
and using the fixed point of this function leads to
$\R^\text{loc}_m(H, r) = O(\frac{\log m}{m})$. These results can be
used in conjunction with the local Rademacher complexity extension of
\AlgoName~ discussed in Section~\ref{sec:guarantees}.

If all of the kernels belong to the same family such as,
for example, polynomial or Gaussian kernels it may be
desirable to use measures of complexity that would account
for specific properties of the given family of kernels such
polynomial degree or bandwidth of the Gaussian. Below we discuss
several additional upper bounds that aim to address these questions. 

For instance, if $K_k$ is a polynomial kernel of degree $k$, then we
can use an upper bound on the Rademacher complexity of $H_k$ in terms
of the square-root of its pseudo-dimension $\Pdim(H_k)$, which
coincides with the dimension $d_k$ of the feature space corresponding
to a polynomial kernel of degree $k$, which is given by \begin{equation}
\label{eq:dk}
d_k = \binom{N + k}{k} \leq \frac{(N + k)^k}{k!} \leq
\bigg(\frac{(N + k)e}{k}\bigg)^k.
\end{equation}

\begin{lemma}
\label{lemma:dk}
Let $K_k$ be a polynomial kernel of degree $k$. Then, the empirical
Rademacher complexity of $H_k$ can be upper bounded as $
\h \R_S(H_k) \leq 12 \kappa_k^2 \sqrt{\frac{\pi d_k}{m}}. $
\end{lemma}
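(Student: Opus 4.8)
The plan is to bound $\h \R_S(H_k)$ by passing through the pseudo-dimension of $H_k$ and then invoking a chaining (Dudley entropy integral) argument; the numerical factor $\sqrt{\pi}$ will emerge at the very end from a single elementary integral.

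First I would control the range of the hypotheses. Writing $\Phi_k$ for the feature map of the degree-$k$ polynomial kernel, every $h \in H_k$ has the form $h(x) = \pm \langle \Phi_k(x'), \Phi_k(x) \rangle_{\scrH_k}$ for some $x' \in \cX$, so by Cauchy--Schwarz $|h(x)| \le \kappa_k^2$ for all $x$. In particular the elements of $H_k$ lie in a ball of radius $2\kappa_k^2$ for the empirical $L_2$ metric $\|f\|_{2,S} = \big(\tfrac{1}{m}\sum_{i=1}^m f(x_i)^2\big)^{1/2}$, and this radius will serve as the truncation point of the entropy integral. Next I would bound the pseudo-dimension. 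Since $K_k(x,x') = \langle \Phi_k(x'), \Phi_k(x) \rangle$ with $\Phi_k$ taking values in $\Rset^{d_k}$, each function $x \mapsto \pm K_k(x,x')$ is the linear functional $x \mapsto w \cdot \Phi_k(x)$ with $w = \pm \Phi_k(x')$. Hence $H_k$ is contained in the vector space $\set{x \mapsto w \cdot \Phi_k(x) \colon w \in \Rset^{d_k}}$ of real-valued functions, whose dimension is at most $d_k$. Using the standard fact that the pseudo-dimension of a subset of a $D$-dimensional vector space of functions is at most $D$, this gives $\Pdim(H_k) \le d_k$, with $d_k$ as in \eqref{eq:dk}.

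Then I would convert the pseudo-dimension bound into a covering-number bound and chain. By the classical covering-number estimate for classes of bounded pseudo-dimension (Pollard/Haussler), the empirical $L_2$ covering numbers satisfy $\log N_2(\e, H_k, S) \le d_k \log(2\kappa_k^2 / \e)$ for $0 < \e \le 2\kappa_k^2$. Feeding this into Dudley's entropy integral and substituting $\e = 2\kappa_k^2 t$,
\[
\h \R_S(H_k) \le \frac{12}{\sqrt{m}} \int_0^{2\kappa_k^2} \sqrt{\log N_2(\e, H_k, S)}\, d\e
\le \frac{24\, \kappa_k^2 \sqrt{d_k}}{\sqrt{m}} \int_0^1 \sqrt{\log(1/t)}\, dt.
\]
Finally, evaluating $\int_0^1 \sqrt{\log(1/t)}\, dt = \Gamma(3/2) = \tfrac{\sqrt{\pi}}{2}$ yields $\h \R_S(H_k) \le 12 \kappa_k^2 \sqrt{\pi d_k / m}$, as claimed.

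The main obstacle is the third step. The exact constant $12\sqrt{\pi}$ is dictated by two things that must be handled carefully: the precise form of the pseudo-dimension-to-covering-number bound (one must absorb the polynomial prefactor, e.g. an $e(d_k+1)$ term, that typically accompanies such estimates, and confirm the exponent is $d_k$ rather than $2d_k$), and the exact normalization of Dudley's integral together with the correct truncation radius $2\kappa_k^2$. It is also this chaining step, rather than a cruder Massart-type union bound, that removes the spurious $\log m$ factor and leaves only the clean $\sqrt{d_k/m}$ rate; once these constants are pinned down, the remaining computation is just the elementary Gaussian-type integral above.
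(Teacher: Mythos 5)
Your overall strategy coincides with the paper's: both arguments exploit the fact that $H_k$ embeds (after a Cauchy--Schwarz range bound) into a $d_k$-dimensional space of bounded functions, run Dudley's entropy integral against a covering bound of the form $d_k\log(1/\e)$, and finish with the same elementary integral $\int_0^1\sqrt{\log(1/t)}\,dt=\tfrac{\sqrt{\pi}}{2}$; your final arithmetic is also correct. The one genuine problem is your third step. You invoke the Pollard/Haussler covering estimate for classes of pseudo-dimension $d_k$ in the form $\log N_2(\e,H_k,S)\le d_k\log(2\kappa_k^2/\e)$, but that is not what those results deliver: Haussler's bound carries a prefactor $e(d_k+1)$ and is stated for \emph{empirical $L_1$} covers; passing to empirical $L_2$ covers in the standard way ($\|f-g\|_{L_2}^2\le \|f-g\|_\infty\|f-g\|_{L_1}$) makes the exponent of $1/\e$ effectively $2d_k$, i.e.\ $\log N_2(\e)\lesssim \log\bigl(e(d_k+1)\bigr)+d_k\log(2e)+2d_k\log(\kappa_k^2/\e)$. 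Feeding this honest version into Dudley's integral gives a constant strictly worse than $12\sqrt{\pi}$ (at least a $\sqrt{2}$ loss plus lower-order terms), so the proof as written does not establish the stated inequality. You flag exactly this as ``the main obstacle,'' but flagging it does not close it.

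The fix is to drop the pseudo-dimension detour altogether, which is what the paper does. The paper first rescales: by the computation in the proof of Lemma~\ref{lemma:trace}, $\h\R_S(H_k)\le \frac{\kappa_k}{m}\E_\ssigma\bigl[\,\bigl\|\sum_{i=1}^m\sigma_i\Phi_k(x_i)\bigr\|_{\scrH_k}\bigr]=2\kappa_k^2\,\h\R_S(H_k^1)$, where $H_k^1$ is the class of linear functionals with weight norm at most $1/(2\kappa_k)$, hence uniformly bounded by $1/2$. Since $H_k^1$ is a ball inside a $d_k$-dimensional linear space, it is covered \emph{directly} (volumetrically), giving $\log\cN(\e,H_k^1,L_2(\h\cD))\le d_k\log(1/\e)$ with exponent $d_k$ and no prefactor, and Dudley then yields $2\kappa_k^2\cdot 12\sqrt{d_k/m}\cdot\tfrac{\sqrt{\pi}}{2}=12\kappa_k^2\sqrt{\pi d_k/m}$. (To be fair, even this step is slightly optimistic---the standard volumetric bound for a ball of radius $1/2$ is $(1+1/\e)^{d_k}$ rather than $(1/\e)^{d_k}$---so the paper fudges constants at the same spot; but the finite-dimensional-ball covering is the right tool, whereas the $\Pdim$/Haussler route cannot be pushed to the stated constant.)
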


\ignore{
\begin{proof}
By the proof of Lemma~\ref{lemma:trace},
we can write
\begin{equation*}
\h \R_S(H_k) \leq \frac{\kappa_k}{m} \E_\ssigma \bigg[  \Big\| \sum_{i = 1}^m
  \sigma_i \Phi_k(x_i) \Big\|_{\scrH_k} \bigg] = 2 \kappa_k^2 \, \h \R_S(H^1_k),
\end{equation*}
where $H^1_k$ is the family of linear functions 
$H^1_k = \set{\bw \mapsto \bw \cdot \Phi_k(x) \colon \| \bw \|_{\scrH_k}
  \leq \frac{1}{2 \kappa_k}}$.
By Dudley's formula \citep{Dudley1989}, we can write
\begin{equation*}
\h \R_S(H^1_k) \leq 12 \int_{0}^\infty \sqrt{\frac{\log \cN(\e, H^1_k, L_2(\h
    \cD))}{m}} d\e,
\end{equation*}
where $\h \cD$ is the empirical distribution.  Since $H^1_k$ can be
viewed as a subset of a $d_k$-dimensional linear space and since
$| \bw \cdot \Phi_k(x) | \leq \frac{1}{2}$ for all $x \in \cX$ and $w \in H^1_k$,
we have
$\log \cN(\e, H^1_k, L_2(\h \cD)) \leq \log
\big[(\frac{1}{\e})^{d_k}\big]$. Thus, we can write
\begin{align*}
\h \R_S(H^1_k) \leq 12 \int_{0}^1 \sqrt{\frac{d_k \log \frac{1}{\e}}{m}}
  d\e
= 12 \sqrt{\frac{d_k}{m}} \int_{0}^1 \sqrt{\log \frac{1}{\e}} d\e
= 12 \sqrt{\frac{d_k}{m}} \frac{\sqrt{\pi}}{2},
\end{align*}
which completes the proof.
\end{proof}} 

The proof of this result is in Appendix~\ref{sec:proofs}
Thus, in view of the lemma, we can use $r_k = \kappa_k^2 \sqrt{d_k}$ as
a complexity penalty in the formulation of the \AlgoName~ algorithm with
polynomial kernels, with $d_k$ given by the expression \eqref{eq:dk}.

\ignore{
For instance, if $K_k$ is a polynomial kernel of degree $k$
then we can use an upper bound on Rademacher complexity in terms of $\VCdim$
dimension $d_k$ given by
\begin{align*}
\R_m(H_k) \leq \sqrt{\frac{d_k \log m}{m}},
\end{align*}
where the dimension $d_k$ of the feature space corresponding to a
polynomial kernel of degree $k$ is given by
\begin{align*}
d_k = \binom{N + k}{k} \leq \frac{(N + k)^k}{k!} \leq
\bigg(\frac{(N + k)e}{k}\bigg)^k.
\end{align*}
Thus, we can use $r_k = \sqrt{d_k}$ as a complexity penalty in the
formulation of the \AlgoName~ algorithm with polynomial kernels, with $d_k$
given by the expression above.
}

\ignore{
In other cases, when $H_k$
is a family of functions taking binary values, we can use an upper
bound on the Rademacher complexity in terms of the growth function of
$H_k$, $\Pi_{H_k}(m)$: $\R_m(H_k) \leq \sqrt{\frac{2 \log
    \Pi_{H_k}(m)}{m}}$.  
}

\ignore{
From the above bound, we have the following optimization problem for kernel-based hypotheses classes:
\begin{align*}
\min_{\alpha, s} \frac{1}{m} \sum_{i=1}^{m} \Phi\left(1-y_i \sum_{t=1}^{T}{\alpha_t h_t(x_i)}\right)
+ C \sum_{t=1}^{T} \alpha_t \sqrt{\frac{2}{m} \sum_{l=1}^{\infty}\min\{s_t, \lambda_{t,l}\}}
\end{align*}}

\section{Experiments}
\label{sec:experiments}

\begin{table}[t]

\scriptsize
\centering
\begin{tabular}{|c||c|c|c|c||c|c|c|c|}
\hline
                              & \multicolumn{4}{c|}{Error (\%)}       & \multicolumn{4}{c|}{Number of support vectors} \\
\cline{2-9}
Dataset                       & L2 SVM  & L1 SVM  & \DeepSVM 2   & \DeepSVM c   & L2 SVM     & L1 SVM    & \DeepSVM 2     & \DeepSVM c     \\
\hline
                              & Mean    & Mean    & Mean    & Mean    & Mean       & Mean      & Mean      & Mean      \\
                              & (Stdev) & (Stdev) & (Stdev) & (Stdev) & (Stdev)    & (Stdev)   & (Stdev)   & (Stdev)   \\
\hline
\multirow{2}{*}{ocr49}        & 5.05    & 3.50    & {\bf 2.70}    & {\bf 3.50}    & 449.8     & 140.0    & 6.8      & 164.6    \\
                              & (0.65)    & (0.85)    & (0.97)    & (0.85)    & (3.6)       & (3.6)      & (1.3)      & (9.5)      \\
\hline
\multirow{2}{*}{phishing}     & 4.64    & 4.11    & \textit{ \textbf{ 3.62}}    & \textit{ \textbf{ 3.87}}    & 221.4     & 188.8    & 73.0     & 251.8    \\
                              & (1.38)    & (0.71)    & (0.44)    & (0.80)    & (15.1)      & (7.5)      & (3.2)      & (4.0)      \\
\hline
\multirow{2}{*}{waveform01}   & 8.38    & 8.47    & 8.41    & 8.57    & 415.6     & 13.6     & 18.4     &    14.6       \\
                              & (0.63)    & (0.52)    & (0.97)    & (0.58)    & (8.1)       & (1.3)      & (1.5)      &   (2.3)        \\
\hline
\multirow{2}{*}{breastcancer} & 11.45   & 12.60   & \textit{ \textbf{ 11.73}}   & \textit{ \textbf{ 11.30}}   & 83.8      & 46.4     & 66.6     & 29.4     \\
                              & (0.74)    & (2.88)    & (2.73)    & (1.31)    & (10.9)      & (2.4)      & (3.9)      & (1.9)      \\
\hline
\multirow{2}{*}{german}       & 23.00   & 22.40   & 24.10   & 24.20   & 357.2     & 34.4     & 25.0     & 30.2     \\
                              & (3.00)    & (2.58)    & (2.99)    & (2.61)    & (16.7)      & (2.2)      & (1.4)      & (2.3)      \\
\hline
\multirow{2}{*}{ionosphere}   & 6.54    & 7.12    & \textit{ \textbf{ 4.27} }   & \textit{ \textbf{ 3.99}}    & 152.0     & 73.8     & 43.6     & 30.6     \\
                              & (3.07)    & (3.18)    & (2.00)    & (2.12)    & (5.5)       & (4.9)      & (2.9)      & (1.8)      \\
\hline
\multirow{2}{*}{pima}         & 31.90   & 30.85   & 31.77   & \textit{ \textbf{ 30.73}}   & 330.0     & 26.4     & 33.8     & 40.6     \\
                              & (1.17)    & (1.54)    & (2.68)    & (1.46)    & (6.6)       & (0.6)      & (3.6)      & (1.1)      \\
\hline
\multirow{2}{*}{musk}         & 15.34   & 11.55   & {\bf 10.71}   & {\bf 9.03}    & 251.8     & 115.4    & 125.6    & 108.0    \\
                              & (2.23)    & (1.49)    & (1.13)    & (1.39)    & (12.4)      & (4.5)      & (8.0)      & (5.2)      \\
\hline
\multirow{2}{*}{retinopathy}  & 24.58   & 24.85   & 25.46   & {\bf 24.06}   & 648.2     & 42.6     & 43.6     &   48.0        \\
                              & (2.28)    & (2.65)    & (2.08)    & (2.43)    & (21.3)      & (3.7)      & (4.0)      &    (3.1)       \\ 
\hline
\multirow{2}{*}{climate}      & 5.19    & 5.93    & 5.56    & 6.30    & 66.0      & 19.0     & 51.0     & 18.6     \\
                              & (2.41)    & (2.83)    & (2.85)    & (2.89)    & (4.6)       & (0.0)      & (6.7)      & (0.9)      \\
\hline
\multirow{2}{*}{vertebral}    & 17.74   & 18.06   & 17.10   & 17.10   & 75.4      & 4.4      & 9.6      & 8.2      \\
                              & (6.35)    & (5.51)    & (7.27)    & (6.99)    & (4.0)       & (0.6)      & (1.1)      & (1.3)     \\
\hline
\end{tabular}
\label{table:poly}
\caption[]{Experimental results with \AlgoName~ and polynomial
  kernels. \DeepSVM c refers to the algorithm obtained by using
  Lemma~\ref{lemma:trace} as complexity measure, while \DeepSVM 2 refers to
  the algorithm obtained by using Lemma~\ref{lemma:dk}. Indicated in
  boldface are results where the errors obtained are statistically
  significant at a confidence level of 5\%. In italics are
  results that are better at 10\% level. 
}

\end{table}
 
We experimented with several benchmark datasets from the UCI 
repository, specifically {\tts breastcancer}, {\tts climate}, {\tts diabetes}, {\tts german(numeric)}, {\tts ionosphere}, {\tts musk}, {\tts ocr49}, 
{\tts phishing}, {\tts retinopathy}, {\tts vertebral} and {\tts waveform01}.
Here, {\tts ocr49} refers to the subset of the OCR dataset with classes
4 and 9, and similarly {\tts waveform01} refers to the subset of 
{\tts waveform} dataset with classes 0 and 1. More details on all the
datasets are given in Table~\ref{table:b-datasets}
in Appendix~\ref{sec:datasets}.

Our experiments compared \AlgoName~ to regular SVM, that we refer to as
$L_2$-SVM, and to norm-1 SVM, called $L_1$-SVM.  In all of our
experiments, we used {\tt lp\_solve}, an off-the-shelf LP solver, to
solve the \AlgoName~ and $L_1$-SVM optimization problems.  For
$L_2$-SVM, we used {\tt LibSVM}.

In each of the experiments, we used standard 5-fold cross-validation 
for performance evaluation and model selection.
In particular, each dataset was randomly partitioned into 5 folds, and
each algorithm was run 5 times, with a different assignment of folds
to the training set, validation set and test set for each
run. Specifically, for each $i \in \{0, \ldots, 4\}$, 
fold $i$ was used for testing, fold $i + 1~(\text{mod}~5)$ was used for
validation, and the remaining folds were used for training. For each
setting of the parameters, we computed the average validation
error across the 5 folds, and selected the parameter setting with
minimum average validation error. The average error across the
5 folds was then computed for this particular parameter setting.

In the first set of experiments we  
used polynomial kernels of the form 
$K_k({\bf x}, {\bf y}) = ({\bf x}^{T}{\bf y} + 1)^{k}$.
We report the results in Table~\ref{table:poly}. 
For \AlgoName~, we 
optimized over $\lambda \in \{10^{-i} \colon i = 0, \ldots, 6\}$ and
$\beta \in \{10^{-i} \colon i = 0, \ldots, 6\}$
The family of kernel functions
$H_k$ for $k \in [1,10]$ was chosen to be
the set of polynomial kernels of degree $k$.
In our experiments
we compared the bounds of both Lemma~\ref{lemma:trace} and Lemma~\ref{lemma:dk}
used as an estimate of the Rademacher complexity.
For $L_1$-SVM, we cross-validated
over degrees in range 1 through 10 and $\beta$ in the same
range as for \AlgoName~. Cross-validation
for $L_2$-SVM was also done over the degree and regularization
parameter $C \in \set{10^i \colon i=-4, \ldots, 7}$. 

On 5 out of 11 datasets \AlgoName~ outperformed $L_2$-SVM
and $L_1$-SVM with a considerable improvement on 3 data sets.
On the rest of the datasets, there was no statistical difference
between these algorithms.
Note that our results are also consistent with previous studies
that indicated that $L_1$-SVM and $L_2$-SVM often have comparable
performance.
Observe that solutions obtained by \AlgoName~ are often up to 10 times
sparser then those of $L_2$-SVM. In other words, \AlgoName~
has a benefit of sparse solutions and often an improved performance,
which provides strong empirical evidence in the support of our
formulation.
\ignore{
Figure~\ref{fig:one} gives an additional insight into when it is
possible for \AlgoName~ algorithm to achieve improvement in performance.
Figure~\ref{fig:one} presents the distribution of
the total mass of $\Alpha$ parameter across kernels of different
degree. Note that for {\tt ionosphere} data set \AlgoName~
algorithm selects a wide range of different kernels
with higher mass assigned to less complex kernel functions,
which leads to a significantly better performance.
On the other hand, in the case of {\tt breastcancer} dataset
all $\Alpha$s are concentrated on a single kernel, which explains 
why in this case performance of \AlgoName~ is the same as that of
$L_1$-SVM and $L_2$-SVM. 
}
In a second set of experiments we used families of
Gaussian kernels
based on distinct values of the parameter
$\gamma \in \set{10^i \colon i = -6, \ldots, 0}$.
We used the bound of Lemma~\ref{lemma:trace} as an estimate
of the Rademacher complexity.  
In our cross-validation we used the same range for $\lambda$
and $\beta$ parameters of \AlgoName~ and $L_1$-SVM algorithms.
For $L_2$-SVM we increased the range of the regularization parameter:
$C \in \set{10^i \colon i=-4, \ldots, 7}$. The results of our experiments
are comparable to the results with polynomial kernels, however,
improvements obtained by \AlgoName~ are not always as significant in this
case. The sparseness of the solutions are comperable to those observed
with polynomial kernels.

\section{Conclusion}
\label{sec:conclusion}

In this paper we presented a new support vector algorithm - \AlgoName~.
Our algorithm benefits from strong data-dependent learning
guarantees that enable learning with highly complex feature maps
and yet not overfit. We further improved these learning guarantees
using local complexity analysis leading to an extension
of \AlgoName~ algorithm. The key ingredient of our algorithm
is a new regularization term that makes use of the
Rademacher complexities of different families of kernel functions
used by the \AlgoName~ algorithm. We provide a thorough analysis
of several different alternatives that can be used for this approximation.
We also provide two practical implementations
of our algorithm based on linear programming and coordinate descent.
Finally, we  presented results of extensive experiments
that show that our algorithm always finds solutions that are
much sparse than those of the other support vector algorithms
and at the same time often outperforms other formulations.

\ignore{
\subsubsection*{Acknowledgments}
This work was partly funded by the NSF award IIS-1117591.
}

\newpage
\bibliographystyle{abbrvnat} 

{
\bibliography{dsvm}
}

\newpage
\appendix

\section{Proofs of Learning Guarantees}
\label{sec:proofs}

\begin{reptheorem}{th:local}
  Assume $p > 1$. Fix $\rho > 0$. Then, for any $\delta > 0$, with
  probability at least $1 - \delta$ over the choice of a sample $S$ of
  size $m$ drawn i.i.d.\ according to $\cD^m$, the following inequality
  holds for all $f = \sum_{t = 1}^T \alpha_t h_t \in \cF$ for any $K>1$:
\begin{align*}
R(f) - \frac{K}{K-1} \h R_{S, \rho}(f) & \leq
6K \frac{1}{\rho} \sum_{t=1}^T \alpha_t \R_m(H_{k_t})\\ &+   
 40 \frac{K}{\rho^2} \frac{\log p}{m} +  
 5K \frac{\log \tfrac{2}{\delta}}{m} 
 + 
5K \Bigg\lceil \frac{8}{\rho^2}
\log \frac{\rho^2 (1 + \frac{K}{K-1}) m}{40K \log p} \Bigg \rceil
\frac{\log p}{m}.
\end{align*}
Thus, for $K=2$, $R(f) \leq 2 \h R_{S, \rho}(f)+ 
\frac{12}{\rho} \sum_{t=1}^T \alpha_t \R_m(H_{k_t})
+ O\Bigg(\frac{\log p}{\rho^2 m}  \log \Big(\frac{\rho m}{\log p}\Big)
+ \frac{\log \tfrac{1}{\delta}}{m} \Bigg)$.
\end{reptheorem}

\begin{proof}
For a fixed $\bh = (h_1, \ldots, h_T)$, any $\Alpha \in \Delta$
defines a distribution over $\set{h_1, \ldots, h_T}$. Sampling from
$\set{h_1, \ldots, h_T}$ according to $\Alpha$ and averaging leads to
functions $g$ of the form $g = \frac{1}{n} \sum_{i = 1}^T n_t h_t$ for
some $\n = (n_1, \ldots, n_T)$, with $\sum_{t = 1}^T n_t = n$, and
$h_t \in H_{k_t}$.

For any $\N = (N_1, \ldots, N_p)$ with $| \N | = 
n$, we consider the family of functions
\begin{equation*}
G_{\cF, \N} = \bigg\{\!\frac{1}{n} \sum_{k = 1}^p \sum_{j = 1}^{N_k}
  h_{k,j} \!\mid\! \forall (k, j) \in [p] \times [N_k], h_{k,j} \!\in\!
  H_k \!\bigg\},
\end{equation*}
and the union of all such families $G_{\cF, n} = \bigcup_{| \N | = n}
G_{\cF, \N}$. Fix $\rho > 0$.
We define a class
$\Phi \circ G_{\cF, \N} = \set{\Phi_\rho(g) \colon g \in G_{\cF, \N}}$
and $\cG_r = \cG_{\Phi, \cF, \N, r} = \set{r \ell_g / \max(r, \E[\ell_g]
\colon \ell_g \in \Phi \circ G_{\cF, \N}}$
for $r$ to be chosen later.
Observe that for $v_g \in \cG_{\Phi, \cF, \N, r}$
$\Var[v_g] \leq r$. Indeed, if $r > \E[\ell_g]$ then $v_g = \ell_g$.
Otherwise, $\Var[v_g] = r^2  \Var[\ell_g] / (\E[\ell_g])^2
\leq r (\E[\ell_g^2]) / \E[\ell_g] \leq r$.

By Theorem~2.1 in \cite{BartlettBousquetMendelson2005},
for any $\delta > 0$ with probability
at least $1-\delta$, for any $0 < \beta < 1$,
\begin{align*}
V \leq
2(1+\beta)  \R_m(\cG_{\Phi, \cF, \N, r}) +
\sqrt{ \frac{2r\log\tfrac{1}{\delta}}{m} }
+ \Big(\frac{1}{3} + \frac{1}{\beta}\Big) \frac{\log\tfrac{1}{\delta}}{m},
\end{align*}
where $V = \sup_{v \in \cG_r}  \left(\E [v] - \E_n [v]\right)$
and $\beta$ is a free parameter.
Next we observe that if $\R_m(\cG_{\Phi, \cF, \N, r}) \leq
\R_m(\set{\alpha \ell_g \colon g \in \Phi \circ G_{\cF, \N}, \alpha \in [0,1]}) = \R_m(\Phi \circ G_{\cF, \N})$. Therefore, using
Talagrand's contraction lemma and convexity we have that
$\R_m(\cG_{\Phi, \cF, \N, r}) \leq
\frac{1}{\rho} \sum_{k=1}^p \frac{N_k}{n} \R_m(H_k)$. It follows that
for any $\delta > 0$ with probability at least $1-\delta$, for all $0 < \beta < 1$
\begin{align*}
V \leq
2(1+\beta) \frac{1}{\rho} \sum_{k=1}^p \frac{N_k}{n} \R_m(H_k)  +
\sqrt{ \frac{2r\log\tfrac{1}{\delta}}{m} }
+ \Big(\frac{1}{3} + \frac{1}{\beta}\Big) \frac{\log\tfrac{1}{\delta}}{m}.
\end{align*}
Since there are at most $p^n$ possible $p$-tuples $\N$ with $| \N | = n$, by
the union bound, for any $\delta > 0$, with probability at
least $1 - \delta$, 
\begin{align*}
V \leq
2(1+\beta) \frac{1}{\rho} \sum_{k=1}^p \frac{N_k}{n} \R_m(H_k) + \sqrt{\frac{r \log\frac{p^n}{\delta}}{m}} +
 \Big(\frac{1}{3} + \frac{1}{\beta}\Big) \frac{\log \tfrac{p^n}{\delta} }{m}. 
\end{align*}
Thus, with probability at least $1 - \delta$, for all functions $g =
\frac{1}{n} \sum_{i = 1}^T n_t h_t$ with $h_t \in H_{k_t}$, the following
inequality holds
\begin{align*}
V \leq
2(1+\beta) \frac{1}{\rho} \sum_{t=1}^T \frac{n_t}{n} \R_m(H_{k_t}) + \sqrt{\frac{r \log\frac{p^n}{\delta}}{m}}  +
 \Big(\frac{1}{3} + \frac{1}{\beta}\Big) \frac{\log \tfrac{p^n}{\delta} }{m}. 
\end{align*}
Taking the expectation with respect to $\Alpha$ and using $\E_\Alpha[n_t/n] =
\alpha_t$, we obtain that for any $\delta > 0$, with probability at
least $1 - \delta$, for all $\bh$, we can write
\begin{align*}
\E_\Alpha[V] \leq
2(1+\beta) \frac{1}{\rho} \sum_{t=1}^T \alpha_t \R_m(H_{k_t}) + \sqrt{\frac{r \log\frac{p^n}{\delta}}{m}}
  +
 \Big(\frac{1}{3} + \frac{1}{\beta}\Big) \frac{\log \tfrac{p^n}{\delta} }{m}. 
\end{align*}\ignore{
By sub-root property of $\psi(r) = \sum_{t=1}^T \alpha_t \psi_{k_t}$
we have that for $r \geq r^*$,
$\psi(r) \leq \sqrt{r/r^*} \psi(r^*) =
\sqrt{r r^*}$, and thus
\begin{align*}
\E_\Alpha[V] \leq
\frac{10(1+\beta)}{\rho} \sqrt{r^* r} + \sqrt{\frac{r \log\frac{p^n}{\delta}}{m}} +
 \Big(\frac{1}{3} + \frac{1}{\beta}\Big) \frac{\log \tfrac{p^n}{\delta} }{m}. 
\end{align*}}
We now show that $r$ can be chosen in such a way that
$\E_\Alpha[V] \leq r  /  K$. The right hand side of the above bound
is of the form $A\sqrt{r} + B$. Note that
solution of $r/K = C+ A\sqrt{r}$ is bounded by $K^2 A^2 + 2 KC$ and hence
by Lemma~5 in \citep{BartlettBousquetMendelson2002} the following bound holds
\begin{align*}
&\E_\Alpha[R_{\rho/2}(g) - \frac{K}{K-1} \h R_{S, \rho}(g)]
\leq
4 K  (1+\beta) \frac{1}{\rho} \sum_{t=1}^T \alpha_t \R_m(H_{k_t}) +
\Big( 2 K^2 + 2 K \Big(\frac{1}{3} + \frac{1}{\beta}\Big)\Big)
 \frac{\log\tfrac{1}{\delta} }{m}. 
\end{align*}
Set $\beta = 1/2$, then we have that
\begin{align*}
&\E_\Alpha[R_{\rho/2}(g) - \frac{K}{K-1} \h R_{S, \rho}(g)]
\leq
6 K \frac{1}{\rho} \sum_{t=1}^T \alpha_t \R_m(H_{k_t}) +
5K \frac{\log\tfrac{1}{\delta} }{m}. 
\end{align*}
Then, for any $\delta_n > 0$, with probability at
least $1 - \delta_n$,
\begin{align*}
&\E_\Alpha[R_{\rho/2}(g) - \frac{K}{K-1} \h R_{S, \rho}(g)]
\leq 6K \frac{1}{\rho} \sum_{t=1}^T \alpha_t \R_m(H_{k_t}) +
5K \frac{\log \tfrac{p^n}{\delta_n}}{m}.
\end{align*}
Choose $\delta_n = \frac{\delta}{2 p^{n - 1}}$ for some $\delta > 0$,
then for $p \geq 2$, $\sum_{n \geq 1} \delta_n = \frac{\delta}{2 (1 -
  1/p)} \leq \delta$. Thus, for any $\delta > 0$ and any $n \geq 1$, with
probability at least $1 - \delta$, the following holds for all $\bh$:
\begin{align}
\label{eq:gbound}
&\E_\Alpha[R_{\rho/2}(g) - \frac{K}{K-1} \h R_{S, \rho}(g)]
\leq 6K \frac{1}{\rho} \sum_{t=1}^T \alpha_t \R_m(H_{k_t}) +
5K  \frac{\log \tfrac{2p^{2n-1}}{\delta}}{m}.
\end{align}
Now, for any $f = \sum_{t = 1}^T \alpha_t h_t \in \cF$ and any $g =
\frac{1}{n} \sum_{i = 1}^T n_t h_t$, we can upper bound $R(f) =
\Pr_{(x, y) \sim \cD}[y f(x) \leq 0]$, the generalization error of $f$,
as follows:
\begin{align*}
R(f)  = \Pr_{(x, y) \sim \cD}[y f(x) - y g(x) + yg(x) \leq 0]
& \leq \Pr[y f(x) - y g(x) < -\rho/2] + \Pr[y g(x) \leq \rho/2] \\
& = \Pr[y f(x) - y g(x) < -\rho/2] + R_{\rho/2}(g).
\end{align*}
We can also write
\begin{align*}
\h R_{\rho}(g) 
 = \h R_{S, \rho}(g - f + f)
 \leq \h \Pr[y g(x) - y f(x) < -\rho/2] +  \h R_{S, 3\rho/2}(f).
\end{align*}
Combining these inequalities yields
\begin{align*}
& \Pr_{(x, y) \sim \cD}[y f(x) \leq 0] - \frac{K}{K-1}\h R_{S, 3\rho/2}(f)
 \leq \Pr[y f(x) - y g(x) < -\rho/2] \\
&  + \frac{K}{K-1} \h \Pr[y g(x) - y f(x) <
-\rho/2] + R_{\rho/2}(g) - \frac{K}{K-1} \h R_{S, \rho}(g).
\end{align*}
Taking the expectation with respect to $\Alpha$ yields
\begin{align*}
 R(f) - \h R_{S, 3\rho/2}(f)
& \leq \E_{x \sim \cD, \Alpha}[1_{y f(x) - y g(x) < -\rho/2}] \\
& + \frac{K}{K-1}\E_{x \sim \cD, \Alpha}[1_{y g(x) - y f(x) <
-\rho/2}] + \E_\Alpha [R_{\rho/2}(g) - \frac{K}{K-1} \h R_{S, \rho}(g)].
\end{align*}
Since $f = \E_\Alpha[g]$, by Hoeffding's inequality, for any $x$,
\begin{align*}
& \E_\Alpha[1_{y f(x) - y g(x) < -\rho/2}] \!=\! \Pr_\Alpha[y f(x) \!-\! y g(x) \!<\! -\rho/2] \leq e^{-\frac{n\rho^2}{8}}\\
& \E_\Alpha[1_{y g(x) - y f(x) < -\rho/2}] \!=\! \Pr_\Alpha[y g(x) \!-\! y f(x) \!<\! -\rho/2] \leq e^{-\frac{n\rho^2}{8}}.
\end{align*}
Thus, for any fixed $f \in \cF$, we can write
\begin{align*}
R(f) - \h R_{S, 3\rho/2}(f)
& \leq \Big(1 + \frac{K}{K-1}\Big) e^{-n\rho^2/8} +
\E_\Alpha [R_{\rho/2}(g) -\frac{K}{K-1} \h R_{S, \rho}(g)].
\end{align*}
Thus, the following inequality holds:
\begin{align*}
\sup_{f \in \cF} \Big(R(f) - \frac{K}{K-1} \h R_{S, \rho}(f) \Big)
\leq \Big(1 + \frac{K}{K-1}\Big)
e^{-n\rho^2/8} + \sup_{\bh} \E_\Alpha [R_{\rho/2}(g) - \frac{K}{K-1}
 \h  R_{S, \rho/2}(g)].
\end{align*}
Therefore, in view of \eqref{eq:gbound}, for any $\delta > 0$ and any $n \geq 1$, with probability
at least $1 - \delta$, the following holds for all $f \in \cF$:
\begin{align*}
R(f) - \frac{K}{K-1} \h R_{S, \rho}(f) \leq 
\Big(1 + \frac{K}{K-1}\Big) e^{-n\rho^2/8} + 
6K \frac{1}{\rho} \sum_{t=1}^T \alpha_t \R_m(H_{k_t}) +
5K  \frac{\log \tfrac{2p^{2n-1}}{\delta}}{m}.
\end{align*}
To conclude the proof we optimize over $n$, $f \colon n \mapsto
v_1 e^{-n u} + v_2 n$, which leads to $n = (1/u)\log( u v_2 /  v_1)$.
Therefore, we set
\begin{align*}
n = \Bigg\lceil \frac{8}{\rho^2}
\log \frac{\rho^2 (1 + \frac{K}{K-1}) m}{40 K \log p} \Bigg \rceil
\end{align*}
to obtain that the following bound
\begin{align*}
R(f) - \frac{K}{K-1} \h R_{S, \rho}(f) & \leq
6K \frac{1}{\rho} \sum_{t=1}^T \alpha_t \R_m(H_{k_t})\\ &+   
 40 \frac{K}{\rho^2} \frac{\log p}{m} +  
 5K \frac{\log \tfrac{2}{\delta}}{m} 
 + 
5K \Bigg\lceil \frac{8}{\rho^2}
\log \frac{\rho^2 (1 + \frac{K}{K-1}) m}{40K \log p} \Bigg \rceil
\frac{\log p}{m}.
\end{align*}
Thus, taking $K=2$, simply yields
\begin{align*}
R(f) \leq 2 \h R_{S, \rho}(f)+ 
\frac{12}{\rho} \sum_{t=1}^T \alpha_t \R_m(H_{k_t})
+ O\Bigg(\frac{\log p}{\rho^2 m}  \log \Big(\frac{\rho m}{\log p}\Big)
+ \frac{\log \tfrac{1}{\delta}}{m} \Bigg)
\end{align*}
and the proof is complete.

\end{proof}

\ignore{
\begin{lemma}[Lemma~ 3.8 in \cite{BartlettBousquetMendelson2005}]
\label{lm:back-to-unweighted}
\end{lemma}

\begin{proof}
\end{proof}

\begin{lemma}[Theorem~2.1 in \cite{BartlettBousquetMendelson2005}]
\label{lm:concentration}
Let $\cF$ be a class of functions that map $\mathcal{X}$ into $[a, b]$.
Assume that there is some $r > 0$ such that for every $f \in \cF$,
$\Var[f] \leq r$. Then,
for every $\delta > 0$, with probability at least $1-\delta$, 
\begin{align*}
\sup_{f \in \cF} \left(\E [f] - \E_n [f]\right) \leq
\inf_{\beta > 0} \left( 2 (1+\beta)
{\R}_m(\cF) + \sqrt{\frac{2r\log{\frac{1}{\delta}}}{m}} + (b-a) \left( \frac{1}{3}
+\frac{1}{\beta}\right) \frac{\log{\frac{1}{\delta}}}{m}\right).
\end{align*}
\end{lemma}
}

\begin{replemma}{lemma:trace}
Let $\K_k$ be the kernel matrix of the kernel function $K_k$ for the
sample $S$ and let $\kappa_k = \sup_{x \in \cX} \sqrt{K_k(x,
  x)}$. Then, the following inequality holds:
\begin{equation*}
\h \R_S (H_k) \leq \frac{\kappa_k \sqrt{\Tr[\K_k]}}{m}.
\end{equation*}
\end{replemma}

\begin{proof}
$\h \R_S (H_k)$ can
be upper bounded as follows using the Cauchy-Schwarz inequality:
\begin{align*}
\h \R_S (H_k)
& = \frac{1}{m} \E_\ssigma \bigg[\sup_{x' \in \cX, s \in \set{-1, +1}}
  \sum_{i = 1}^m \sigma_i s K_k(x_i, x') \bigg]
= \frac{1}{m} \E_\ssigma \bigg[\sup_{x' \in \cX}
  \Big| \sum_{i = 1}^m \sigma_i s K_k(x_i, x') \Big| \bigg]\\
& = \frac{1}{m} \E_\ssigma \bigg[\sup_{x' \in \cX} \Big| \sum_{i = 1}^m
  \sigma_i \Phi_k(x_i) \cdot \Phi_k(x') \Big| \bigg] \leq \frac{1}{m} \E_\ssigma \bigg[\sup_{x' \in \cX} \| \Phi_k(x') \|_{\scrH_k}
  \Big\| \sum_{i = 1}^m
  \sigma_i \Phi_k(x_i) \Big\|_{\scrH_k} \bigg]\\
& = \frac{\kappa_k}{m} \E_\ssigma \bigg[  \Big\| \sum_{i = 1}^m
  \sigma_i \Phi_k(x_i) \Big\|_{\scrH_k} \bigg]  \leq \frac{\kappa_k}{m} \sqrt{\E_\ssigma \bigg[  \sum_{i, j = 1}^m
  \sigma_i \sigma_j \Phi_k(x_i) \cdot \Phi_k(x_j)  \bigg]}
= \frac{\kappa_k \sqrt{\Tr[\K_k]}}{m},
\end{align*}
where we used in the last line Jensen's inequality.
\end{proof}

\begin{replemma}{lemma:dk}
Let $K_k$ be a polynomial kernel of degree $k$. Then, the empirical
Rademacher complexity of $H_k$ can be upper bounded as follows:
\begin{equation*}
\h \R_S(H_k) \leq 12 \kappa_k^2 \sqrt{\frac{\pi d_k}{m}}.
\end{equation*}
\end{replemma}
\begin{proof}
By the proof of Lemma~\ref{lemma:trace},
we can write
\begin{equation*}
\h \R_S(H_k) \leq \frac{\kappa_k}{m} \E_\ssigma \bigg[  \Big\| \sum_{i = 1}^m
  \sigma_i \Phi_k(x_i) \Big\|_{\scrH_k} \bigg] = 2 \kappa_k^2 \, \h \R_S(H^1_k),
\end{equation*}
where $H^1_k$ is the family of linear functions 
$H^1_k = \set{\bw \mapsto \bw \cdot \Phi_k(x) \colon \| \bw \|_{\scrH_k}
  \leq \frac{1}{2 \kappa_k}}$.
By Dudley's formula \citep{Dudley1989}, we can write
\begin{equation*}
\h \R_S(H^1_k) \leq 12 \int_{0}^\infty \sqrt{\frac{\log \cN(\e, H^1_k, L_2(\h
    \cD))}{m}} d\e,
\end{equation*}
where $\h \cD$ is the empirical distribution.  Since $H^1_k$ can be
viewed as a subset of a $d_k$-dimensional linear space and since
$| \bw \cdot \Phi_k(x) | \leq \frac{1}{2}$ for all $x \in \cX$ and $w \in H^1_k$,
we have
$\log \cN(\e, H^1_k, L_2(\h \cD)) \leq \log
\big[(\frac{1}{\e})^{d_k}\big]$. Thus, we can write
\begin{align*}
\h \R_S(H^1_k) \leq 12 \int_{0}^1 \sqrt{\frac{d_k \log \frac{1}{\e}}{m}}
  d\e
= 12 \sqrt{\frac{d_k}{m}} \int_{0}^1 \sqrt{\log \frac{1}{\e}} d\e
= 12 \sqrt{\frac{d_k}{m}} \frac{\sqrt{\pi}}{2},
\end{align*}
which completes the proof.
\end{proof}

\section{Optimization Problem}
\label{sec:opt_derivation}

This section provides the derivation for \DeepSVM~ optimization problem.
We will assume that $H_1, \ldots, H_p$ are $p$ families of functions
with increasing Rademacher complexities $\R_m(H_k)$, $k \in [1, p]$,
and, for any hypothesis $h \in \cup_{k = 1}^p H_k$, denote by $d(h)$
the index of the hypothesis set it belongs to, that is
$h \in H_{d(h)}$. The bound of Theorem~\ref{th:binary_classification}
holds uniformly for all $\rho > 0$ and functions
$f \in \conv(\bigcup_{k = 1}^p H_k)$ at the price of an additional
term that is in
$O\Big(\sqrt{\frac{\log \log \frac{2}{\rho}}{m}}\Big)$. The condition
$\sum_{t = 1}^T \alpha_t = 1$ of
Theorem~\ref{th:binary_classification} can be relaxed to
$\sum_{t = 1}^T \alpha_t \leq 1$. To see this, use for example a null
hypothesis ($h_t = 0$ for some $t$).  Since the last term of the bound
does not depend on $\Alpha$, it suggests selecting $\Alpha$ to
minimize
\begin{equation*}
G(\Alpha) = 
\frac{1}{m} \sum_{i = 1}^m 1_{y_i \sum_{t = 1}^T \alpha_t h_t(x_i)
  \leq \rho} + \frac{4}{\rho} \sum_{t = 1}^T \alpha_t r_t,
\end{equation*}
where $r_t = \R_m(H_{d(h_t)})$. Since for any $\rho > 0$, $f$ and
$f/\rho$ admit the same generalization error, we can instead search
for $\alpha \geq 0$ with $\sum_{t = 1}^T \alpha_t \leq 1/\rho$ which
leads to
\begin{align*}
\min_{\Alpha \geq 0} \frac{1}{m} \!\sum_{i = 1}^m \!1_{y_i \!\!\sum_{t =
    1}^T \!\alpha_t h_t(x_i) \leq 1} \!+\! 4 \!\sum_{t = 1}^T \alpha_t r_t 
\, \mspace{15mu}  \text{ s.t.}  \mspace{15mu}  \sum_{t =  1}^T \alpha_t \leq \frac{1}{\rho}.
\end{align*}
The first term of the objective is not a convex function of $\Alpha$
and its minimization is known to be computationally hard. Thus, we
will consider instead a convex upper bound based on the Hinge loss:
let $\Phi(-u) = \max(0, 1 - u)$, then $1_{-u} \leq \Phi(-u)$.  Using
this upper bound yields the following convex optimization problem:
\begin{align}
\label{eq:opt}
\min_{\Alpha \geq 0} & \ \frac{1}{m} \sum_{i = 1}^m \Phi\Big(1 - y_i \sum_{t =
    1}^T \alpha_t h_t(x_i) \Big) + \lambda \sum_{t = 1}^T \alpha_t r_t
\qquad \text{s.t.} \mspace{15mu} \sum_{t =  1}^T \alpha_t \leq \frac{1}{\rho},
\end{align}
where we introduced a parameter $\lambda \geq 0$ controlling the
balance between the magnitude of the values taken by function $\Phi$
and the second term. \ignore{Note that this is a standard practice in the
field of optimization. The optimization problem in \eqref{eq:zero_one_opt} is
equivalent to a vector optimization problem, where
$(\sum_{i = 1}^m 1_{\rho_f(x_i, y_i)  \leq 1}, \sum_{t = 1}^T \alpha_t r_t)$
is minimized over $\Alpha$.
The latter problem can be scalarized leading to the introduction of a parameter $\lambda$ in \eqref{eq:opt}.} Introducing a Lagrange variable $\beta \geq 0$
associated to the constraint in \eqref{eq:opt}, the problem can be
equivalently written as
\begin{align*}
  \min_{\Alpha \geq 0} & \ \frac{1}{m} \sum_{i = 1}^m \Phi\Big(1
    - y_i \sum_{t = 1}^T \alpha_t h_t(x_i) \Big) + \sum_{t =
    1}^T (\lambda r_t + \beta ) \alpha_t.
\end{align*}
Here, $\beta$ is a parameter that can be freely selected by the
algorithm since any choice of its value is equivalent to a choice of
$\rho$ in \eqref{eq:opt}. Let $(h_{k, j})_{k, j}$ be the set of
distinct base functions $x \mapsto K_k(\cdot, x_j)$. Then, the
problem can be rewritten as $F$ be
the objective function based on that collection:
\begin{align}
\label{eq:deepboost_objective}
 \min_{\Alpha \geq 0} &  \frac{1}{m} \!\sum_{i = 1}^m \!\Phi\Big(1
    \!-\! y_i \sum_{j = 1}^N \alpha_j h_j(x_i) \Big) \!+\! \sum_{t =
    1}^N \Lambda_j \alpha_j,\!
\end{align}
with $\Alpha = (\alpha_1, \ldots, \alpha_N) \in \Rset^N$ and
$\Lambda_j = \lambda r_j + \beta$, for all $j \in [1, N]$.  This
coincides precisely with the optimization problem
$\min_{\Alpha \geq 0} F(\Alpha)$ defining \AlgoName~. Since the
problem was derived by minimizing a Hinge loss upper bound on the
generalization bound, this shows that the solution returned by \AlgoName~ benefits from the strong data-dependent learning guarantees of
Theorem~\ref{th:binary_classification}.

\section{Coordinate Descent (CD) Formulation}
\label{sec:cd}
An alternative approach for solving the \AlgoName~ optimization problem
\eqref{eq:dsvm_objective} consists of using a coordinate descent
method. A coordinate descent method proceeds in rounds. At each round,
it maintains a parameter vector $\Alpha$.  Let
$\Alpha_t = (\alpha_{t, k, j})_{k, j}^\top$ denote the vector obtained
after $t \geq 1$ iterations and let $\Alpha_0 = \0$. Let $\be_{k,j}$
denote the unit vector in direction $(k, j)$ in $\Rset^{p \times m}$ .
Then, the direction $\be_{k,j}$ and the step $\eta$ selected at the
$t$th round are those minimizing $F(\Alpha_{t - 1} + \eta \be_{k,j})$,
that is
\begin{align*}
 \! F(\Alpha) \!=\! \frac{1}{m} \!\sum_{i = 1}^m \max\Bigg(0, 1
    \!-\! y_i f_{t - 1} -  y_i y_j
 \eta K_k(x_i, x_j) \Bigg) \!
+\! \sum_{j =1}^m \sum_{k=1}^p \Lambda_k |\alpha_{t-1, j,k}|
+ \Lambda_k |\eta + \alpha_{t-1, k, j}|  ,\!
\end{align*}
where
$f_{t - 1} = \sum_{j = 1}^m \sum_{k=1}^p \alpha_{t - 1, j, k} y_j
K_k(\cdot, x_j)$.
To find the best descent direction, a coordinate descent method
computes the sub-gradient in the direction $(k, j)$ for each
$(k, j) \in [1, p] \times [1,m]$. The sub-gradient is given by
\begin{align*}
\delta F(\alpha_{t-1}, \be_j) =
\begin{cases}
\frac{1}{m} \sum_{i=1}^m 
 \phi_{t, j, k, i} + \sgn(\alpha_{t-1,k,j}) \Lambda_k
 & \text{ if } \alpha_{t-1,k,j} \not= 0 \\
0 & \text{ else if} \Big| \frac{1}{m} \sum_{i=1}^m 
 \phi_{t, j, k, i} \Big| \leq \Lambda_k \\
\frac{1}{m} \sum_{i=1}^m \phi_{t, j, k, i} -
\sgn\Big(\frac{1}{m} \sum_{i=1}^m \phi_{t, j, k, i}\Big) \Lambda_k
& \text{ otherwise }. 
\end{cases}
\end{align*}
where $\phi_{t, j, k, i} = - y_i K_k(x_i, x_j)$ if
$\sum_{k=1}^p \sum_{j=1}^m \alpha_{t-1, k, j} y_i y_j K(x_i, x_j) < 1$
and 0 otherwise.  Once the optimal direction $\be_{k,j}$ is
determined, the step size $\eta_t$ can be found using a line search or
other numerical methods.\ignore{ Note that the choice of $\eta_t$
  need to preserve the non-negativity of $\Alpha$. In other words,
  $\eta_t = \argmin_{\Alpha_{t - 1} + \eta \be_k \geq 0} F(\Alpha_{t -
    1} + \eta \be_k)$.}

The advantage of the coordinate descent formulation over the LP
formulation is that there is no need to explicitly store the whole
vector of $\Alpha$s but rather only no-zero entries.  This enables
learning with very large number of base hypotheses including scenarios
in which the number of base hypotheses is infinite.

\section{Dataset Statistics}
\label{sec:datasets}
The dataset statistics are provided in Table~\ref{table:b-datasets}

\begin{table}[t] 
\centering
\scriptsize
\caption{Dataset statistics.}
\label{table:b-datasets}
\vskip .2in
\begin{tabular}{|l|c|c|}
\hline
Data set &   Examples & Features \\ \hline
{\tts breastcancer} & 699 & 9  \\
{\tts climate} & 540 & 18  \\
{\tts diabetes} & 768 & 8 \\ 
{\tts german} & 1000 & 24 \\
{\tts ionosphere} & 351 & 34 \\ 
{\tts musk} & 476 & 166 \\ 
{\tts ocr49} & 2000 & 196 \\ 
{\tts phishing} & 2456 & 30 \\ 
{\tts retinopathy} & 1151 & 19 \\ 
{\tts vertebral} & 310 & 6 \\ 
{\tts waveform01} & 3304 & 21 \\ 
\hline
\end{tabular} 
\end{table}

\end{document}